\documentclass[journal]{IEEEtran}
\usepackage{amsmath,amsfonts,amssymb}
\usepackage{mathtools}

\usepackage{algorithm}
\usepackage{algorithmic}

\usepackage{array}
\usepackage{booktabs}
\usepackage{multirow}

\usepackage[caption=false,font=normalsize,labelfont=sf,textfont=sf]{subfig}

\usepackage{graphicx}
\usepackage{xcolor}
\usepackage{textcomp}
\usepackage{stfloats}
\usepackage{url}
\usepackage{verbatim}
\usepackage{cite}

\usepackage{amsthm}
\theoremstyle{remark}
\newtheorem{remark}{Remark}
\theoremstyle{plain}
\newtheorem{theorem}{Theorem}
\newtheorem{lemma}{Lemma}
\newtheorem{assumption}{Assumption}

\theoremstyle{remark}
\begin{document}

\title{Active Stiffness Control of a Supportive Continuum Robot}

\author{
Rana Danesh\textsuperscript{a},
Farrokh Janabi-Sharifi\textsuperscript{a,*}, and
Farhad Aghili\textsuperscript{b}%
\thanks{\textsuperscript{a}Rana Danesh and Farrokh Janabi-Sharifi are with the
Department of Mechanical, Industrial, and Mechatronics Engineering,
Toronto Metropolitan University, Toronto, ON, Canada.}%
\thanks{\textsuperscript{b}Farhad Aghili is with the Department of Mechanical,
Industrial and Aerospace Engineering, Concordia University,
Montreal, QC, Canada.}%
\thanks{\textsuperscript{*}Corresponding author: Farrokh Janabi-Sharifi
(email: fsharifi@torontomu.ca).}
}
         



\maketitle

\begin{abstract}

Supportive continuum robots (SCRs) enhance the load-bearing capability of an operative continuum robot by mechanically coupling it with a supportive arm. However, their passive stiffness is determined by the mechanical configuration and cannot be adjusted online for varying payloads or interaction forces. Active stiffness control is therefore needed to regulate the load response and maintain positioning accuracy. Meanwhile, the closed-chain structure introduces kinematic constraints that complicate task-space regulation and stiffness control. This paper presents an active task-space stiffness control framework for a tendon-driven SCR. An existing geometric variable strain model describes the closed-chain dynamics, which are projected onto the constraint-consistent motion subspace. A projected sliding mode controller regulates the operative arm tip while preserving the constraints, and closed-loop stability is established through Lyapunov analysis. After position regulation, active apparent stiffness is introduced through a virtual Cartesian spring based on position-error feedback to shape the force--displacement response. The framework is evaluated in simulation and experimentally validated under prescribed external loads and different desired configurations. Results show that increasing the commanded stiffness gain reduces load-induced tip deflection and increases apparent directional stiffness, thereby improving load resistance and positioning robustness under external loading.

\end{abstract}

\begin{IEEEkeywords}
supportive continuum robots, closed-chain systems, sliding mode control, active task-space stiffness control.
\end{IEEEkeywords}

\section{Introduction}

\IEEEPARstart{C}{ontinuum} robots (CRs) enable safe physical interaction in confined environments because of their slender structures, high dexterity, and inherent compliance \cite{russo2023continuum,robinson1999continuum,danesh2025backstepping,danesh2026hybrid}. Beyond navigation, many applications of CRs require them to apply forces or support payloads. In such tasks, stiffness is a key factor governing positioning accuracy and payload capacity \cite{russo2021cooperative,janabi2021cosserat}.

Several approaches have been developed to modulate the physical stiffness of CRs through structural, mechanical, and material-based mechanisms. Structural mechanisms include particle jamming \cite{ranzani2015bioinspired,cianchetti2014soft}, layer jamming \cite{kim2013novel,fan2022novel,wang2019electrostatic}, and fiber jamming \cite{brancadoro2020fiber,choi2021tendon}. Other mechanical approaches employ antagonistic tendon, fluidic, or pneumatic actuation \cite{stilli2014shrinkable,shiva2016tendon,giannaccini2018novel}, tendon routing strategies and prescribed tendon displacements \cite{oliver2019continuum}, or tendon locking configurations \cite{kim2017active}. Material-based approaches alter the stiffness of the robot through thermoplastic structures \cite{mcevoy2015thermoplastic}, thermally activated microfluidic materials \cite{balasubramanian2014microfluidic}, low-melting-point alloy mechanisms \cite{shintake2015variable}, and shape memory materials \cite{alambeigi2016continuum}. Although these approaches can substantially change physical stiffness, they may require additional mechanisms, dedicated materials, structural reconfiguration, thermal activation, or increased system complexity \cite{fan2024overview}.

Rather than directly modifying the CR structure or material properties, active stiffness control regulates the apparent stiffness of CRs through sensing, feedback, and actuation. In this case, the controller shapes the closed-loop force--displacement behavior of the robot. Existing studies have investigated stiffness control in task-space \cite{mahvash2011stiffness}, stiffness regulation based on impedance control \cite{duan2024operating}, variable stiffness control using depth vision \cite{lai2021variable}, and simultaneous control of position and stiffness \cite{yi2024simultaneous}. In these approaches, the resulting apparent stiffness depends jointly on the robot mechanics, controller parameters, and interaction conditions. However, most existing studies on both passive stiffness modulation and active stiffness control have primarily focused on single CRs.

Instead of increasing the stiffness of an individual CR, another approach is to distribute external loads through mechanically coupled continuum arms. Supportive architectures achieve this by connecting a supportive arm to an operative arm, as shown in Fig.~\ref{fig:scr_system}(a) \cite{li2025collaborative,lotfavar2017cooperative}. This configuration enables external loads to be shared across the coupled structure, thereby improving the load resistance, payload capacity, effective stiffness, and operational stability of the system while preserving its inherent flexibility \cite{mahoney2016reconfigurable}. Previous studies have shown that the stiffness of supportive continuum robots (SCRs) can be influenced by structural parameters, including the connection point, connection angle, and tendon constraints of the supportive arm \cite{jalali2024dynamic}. However, this approach relies mainly on structural configuration or design changes, rather than active feedback control during operation.

Moreover, an SCR forms a closed-chain system in which the motion and deformation of the two arms are coupled by loop-closure constraints \cite{danesh2026real}. These constraints restrict the allowable motion and generate internal reaction forces at the mechanical connection. Consequently, the control inputs influence the deformation and load distribution of the entire coupled structure. Active stiffness control must therefore shape the task-space force--displacement response while keeping the control action within the constraint-consistent motion subspace.

To the best of our knowledge, active stiffness control has not been investigated for SCRs. This paper addresses this gap by proposing an active task-space stiffness control framework for a closed-chain SCR. The controller first regulates the operative arm end effector to the desired position and then activates an active stiffness term to modify the closed-loop force--displacement response under external loading. The main contributions are as follows:

\begin{itemize}
\item Using the existing geometric variable strain (GVS) dynamic model, the closed-chain SCR dynamics are projected onto the constraint-consistent motion subspace for task-space regulation.

\item A projected sliding mode control law is developed for task-space position regulation while preserving the closed-chain constraints. A Lyapunov analysis establishes closed-loop stability and asymptotic convergence of the task-space errors.

\item An active task-space stiffness control strategy is introduced after position regulation to modify the apparent stiffness of the coupled robot under external loading.

\item The proposed framework is validated through simulation and experiments on a tendon-driven SCR under known external loads and different desired configurations.
\end{itemize}

The remainder of this paper is organized as follows. Section~II presents the GVS model and the closed-chain constraints. Section~III introduces the proposed control framework. Section~IV presents the simulation results, and Section~V provides the experimental validation. Finally, Section~VI concludes the paper.

\section{Geometric Variable Strain Modeling}

Fig.~\ref{fig:scr_system}(a) depicts the SCR schematic configuration considered in this work. The interconnected CRs form a closed kinematic structure. Using the GVS framework, the following establishes the kinematic relations and constrained dynamic model of the system \cite{renda2020geometric,renda2018discrete,armanini2021discrete,danesh2026real}.

\begin{figure}[t]
    \centering
    \captionsetup[subfloat]{font=scriptsize}

    \subfloat[]{
        \includegraphics[
        width=0.22\textwidth,
        trim=9cm 3cm 9cm 0cm,
        clip]{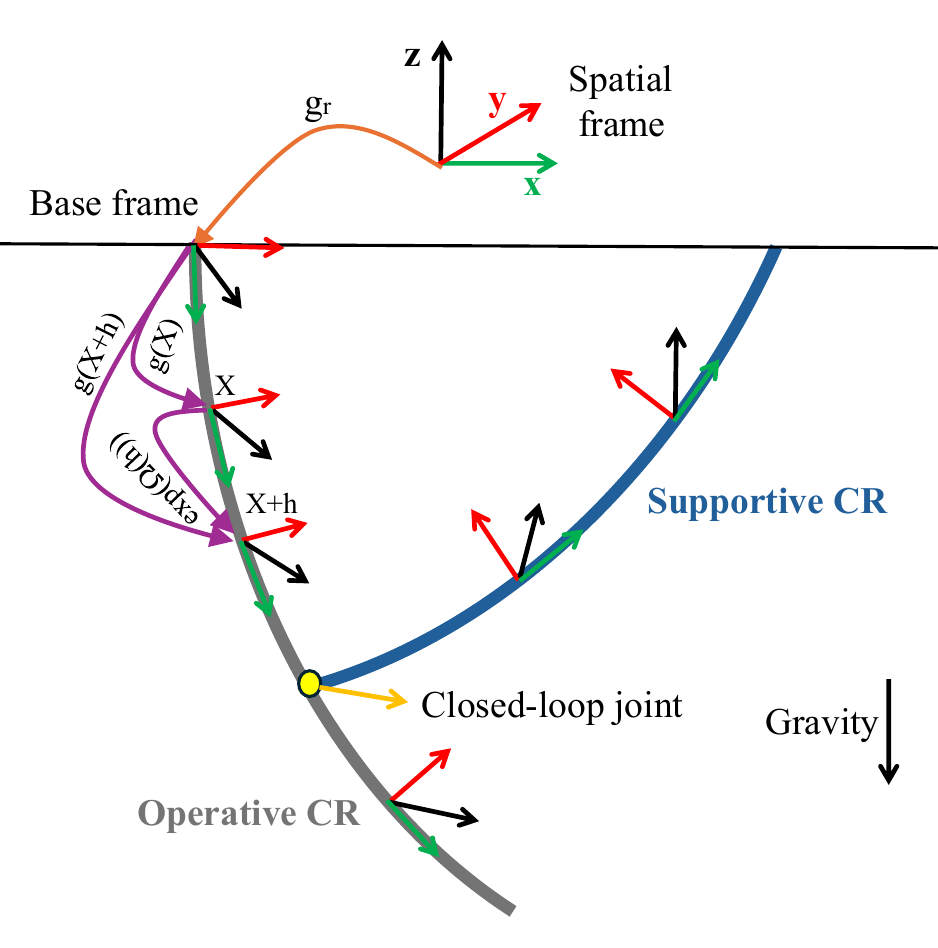}
        \label{fig:scr_schematic}
    }
    \hfill
    \subfloat[]{
        \includegraphics[
        width=0.22\textwidth,
        trim=7.5cm 1cm 7.5cm 0cm,
        clip]{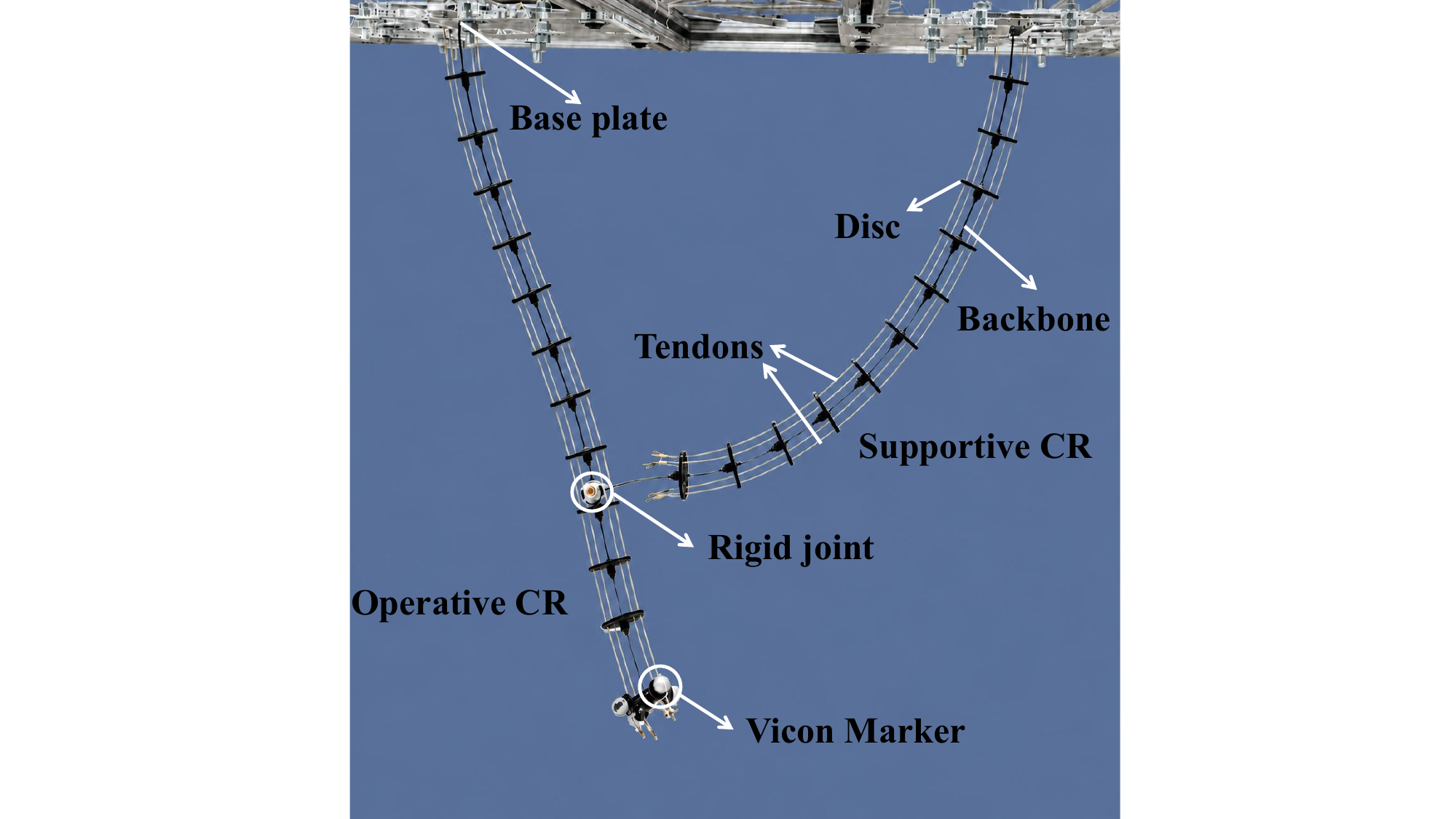}
        \label{fig:scr_setup}
    }

    \caption{Supportive continuum robot system:
(a) schematic representation showing the backbone parameterization and reference frames; and
(b) experimental tendon-driven SCR platform.}
    \label{fig:scr_system}
\end{figure}

\vspace{-1 em}

\subsection{Kinematic Description}
Each CR is treated as a continuously deformable body whose configuration is parameterized by the backbone coordinate {\small $X \in [0,L]$}. The value {\small $X=0$} identifies the base cross section, and {\small $L$} specifies the backbone length. As illustrated in Fig.~\ref{fig:scr_system}(a), a frame is assigned to every backbone cross section. Its pose relative to the spatial frame is represented by the homogeneous transformation
{\small
\begin{equation}
\mathbf{g}(X)=
\begin{bmatrix}
\mathbf{R}(X) & \mathbf{p}(X) \\
\mathbf{0}^{\mathsf{T}} & 1
\end{bmatrix}
\in SE(3),
\label{eq:g_SE3}
\end{equation}}

\noindent where {\small$\mathbf{p}(X)\in\mathbb{R}^3$} specifies the position of the local frame origin, whereas {\small$\mathbf{R}(X)\in SO(3)$} defines the orientation of the corresponding cross section with respect to the spatial frame.

The backbone evolution is formulated on the Lie group {\small$SE(3)$}. Differentiation of the configuration with respect to the material coordinate {\small$X$} and time {\small$t$} gives {\small$\mathbf{g}'=\mathbf{g}\widehat{\boldsymbol{\xi}},
\ \dot{\mathbf{g}}=\mathbf{g}\widehat{\boldsymbol{\eta}}$}, where {\small$(\cdot)'$} and {\small$(\cdot)\dot{;}$} indicate differentiation with respect to {\small$X$} and {\small$t$}, respectively, and {\small$\widehat{(\cdot)}:\mathbb{R}^6\rightarrow\mathfrak{se}(3)$} is the standard hat operator. The strain twist {\small$\boldsymbol{\xi}\in\mathbb{R}^6$} represents local backbone deformation, including bending, torsion, shear, and axial extension per unit length. The body velocity twist {\small$\boldsymbol{\eta}\in\mathbb{R}^6$} describes the instantaneous motion of a cross section in its local frame. Integrating the body kinematic relation {\small$\mathbf{g}'=\mathbf{g}\widehat{\boldsymbol{\xi}}$} over a backbone interval of length {\small$h$}, as indicated in Fig.~\ref{fig:scr_system}(a), produces the Lie-group update {\small$\mathbf{g}(X+h)=\mathbf{g}(X)\exp\big(\widehat{\boldsymbol{\Omega}}(h)\big)$}, where {\small$\boldsymbol{\Omega}(h)\in\mathbb{R}^6$} denotes the Magnus increment associated with the strain field over {\small$[X,X+h]$}. In this study, {\small$\boldsymbol{\Omega}(h)$} is evaluated with a fourth-order Magnus integration method that combines Zanna collocation and two-stage Gauss quadrature~\cite{murray2017mathematical,renda2020geometric}.

Enforcing the equality of mixed partial derivatives,
{\small $(\dot{\mathbf{g}})'=(\mathbf{g}')^{\dot{}}$}, leads to the compatibility relation
{\small
$
\boldsymbol{\eta}'=
\dot{\boldsymbol{\xi}}-
\mathrm{ad}_{\boldsymbol{\xi}}\boldsymbol{\eta},
$
} where {\small $\mathrm{ad}_{(\cdot)}\in\mathbb{R}^{6\times6}$} is the adjoint representation of {\small $\mathfrak{se}(3)$}. Applying the identity {\small $(\mathrm{Ad}_{\mathbf{g}})'=\mathrm{Ad}_{\mathbf{g}}\mathrm{ad}_{\boldsymbol{\xi}}$} gives the following analytical expression for the body velocity

{\small
\begin{equation}
\boldsymbol{\eta}(X)=
\mathrm{Ad}_{\mathbf{g}^{-1}(X)}
\int_{0}^{X}
\mathrm{Ad}_{\mathbf{g}(s)}\dot{\boldsymbol{\xi}}(s)\,ds.
\label{eq:eta_analytic}
\end{equation}
} 

\noindent A finite-dimensional representation is obtained by expanding the strain field with a prescribed set of basis functions,
{\small
\begin{equation}
\boldsymbol{\xi}(X)=
\mathbf{B}_{\boldsymbol{\xi}}(X)\mathbf{q}
+\boldsymbol{\xi}^{\ast}(X),
\label{eq:strain_basis}
\end{equation}
}

\noindent where {\small $\mathbf{B}_{\boldsymbol{\xi}}(X)\in\mathbb{R}^{6\times n}$} is the strain basis matrix, {\small $\mathbf{q}\in\mathbb{R}^n$} is the vector of generalized coordinates, and {\small $n$} is the total number of degrees of freedom. Inserting \eqref{eq:strain_basis} into \eqref{eq:eta_analytic} results in
{\small
\begin{equation}
\begin{aligned}
\boldsymbol{\eta}(X)=&
\mathrm{Ad}_{\mathbf{g}^{-1}(X)}
\int_{0}^{X}
\mathrm{Ad}_{\mathbf{g}(s)}
\mathbf{B}_{\boldsymbol{\xi}}(s)\,ds\;
\dot{\mathbf{q}} 
=
\mathbf{J}(\mathbf{q},X)\dot{\mathbf{q}},
\end{aligned}
\label{eq:geometric_jacobian}
\end{equation}
}

\noindent where {\small $\mathbf{J}(\mathbf{q},X)$} is the geometric Jacobian relating the generalized velocity vector to the local body velocity along the backbone.

\vspace{-1 em}

\subsection{Dynamic Formulation}
The CR dynamics are obtained from Cosserat rod theory expressed on the Lie algebra {\small $\mathfrak{se}(3)$}~\cite{renda2018discrete}. The strong form balance of linear and angular momentum along the backbone is
{\small
\begin{equation}
\boldsymbol{\mathcal{M}}\dot{\boldsymbol{\eta}}
+
\mathrm{ad}^{*}_{\boldsymbol{\eta}}\,\boldsymbol{\mathcal{M}}\boldsymbol{\eta}
=
\boldsymbol{\mathcal{F}}_i'
+
\mathrm{ad}^{*}_{\boldsymbol{\xi}}\,\boldsymbol{\mathcal{F}}_i
+
\bar{\boldsymbol{\mathcal{F}}}_a
+
\bar{\boldsymbol{\mathcal{F}}}_e ,
\label{eq:cosserat_strong}
\end{equation}
}

\noindent where {\small $\boldsymbol{\mathcal{F}}_i$} is the internal wrench, {\small $\bar{\boldsymbol{\mathcal{F}}}_a$} is the distributed wrench induced by actuation, {\small $\bar{\boldsymbol{\mathcal{F}}}_e$} represents the distributed external wrench, and {\small $\boldsymbol{\mathcal{M}}\in\mathbb{R}^{6\times6}$} is the screw inertia matrix. The operator {\small $\mathrm{ad}^{*}_{(\cdot)}$} denotes the coadjoint action on {\small $\mathfrak{se}(3)$}~\cite{renda2020geometric}. The internal wrench {\small $\boldsymbol{\mathcal{F}}_i$} is described by a linear Kelvin--Voigt viscoelastic constitutive model~\cite{renda2014dynamic}, thereby capturing both elastic restoration and strain-rate-dependent dissipation. Under this assumption, the strain and strain-rate fields determine the internal wrench as
{\small
$
\boldsymbol{\mathcal{F}}_i
=
\boldsymbol{\mathcal{K}}\boldsymbol{\xi}
+
\boldsymbol{\mathcal{D}}\,\dot{\boldsymbol{\xi}},
$
} where {\small $\boldsymbol{\mathcal{K}}$} and {\small $\boldsymbol{\mathcal{D}}$} are the stiffness and viscous damping matrices of the rod\cite{hussain2021compliant}. The external loading on the CR includes gravity distributed effects and concentrated loads generated by applied forces or contact. The corresponding external wrench density is
{\small
$
\bar{\boldsymbol{\mathcal{F}}}_e
=
\boldsymbol{\mathcal{M}}
\,\mathrm{Ad}^{-1}_{\mathbf{g}_r\mathbf{g}}\,\boldsymbol{\mathcal{G}}
+
\delta(X-\bar{X})\,\boldsymbol{\mathcal{F}}_p,
$
} where {\small
\(
\boldsymbol{\mathcal{G}}
\)
}is the gravity twist expressed in the spatial frame. The transformation {\small $\mathbf{g}_r\in SE(3)$}, shown in Fig.~\ref{fig:scr_system}(a), maps quantities between the spatial frame and the base frame of the operative arm. The operator {\small $\delta(\cdot)$} is the Dirac distribution, which introduces the concentrated wrench {\small $\boldsymbol{\mathcal{F}}_p$} at the backbone location {\small $X=\bar{X}$}.

\vspace{-0.5 em}

\subsection{Finite Dimensional Dynamics}
To obtain a discrete representation, the strong form in {\small (\ref{eq:cosserat_strong})} is converted into a weak formulation through the principle of virtual work~\cite{renda2018discrete}. Let {\small $\delta\boldsymbol{\zeta}(X)\in\mathbb{R}^6$} be an arbitrary virtual twist field along the backbone. The weak momentum balance is then written as

{\small
\begin{equation}
\begin{aligned}
\int_{0}^{L}
\delta\boldsymbol{\zeta}^{\mathsf{T}}
(
&\boldsymbol{\mathcal{M}}\dot{\boldsymbol{\eta}}
+
\mathrm{ad}^{*}_{\boldsymbol{\eta}}\,\boldsymbol{\mathcal{M}}\boldsymbol{\eta}
- \boldsymbol{\mathcal{F}}_i'
-
\mathrm{ad}^{*}_{\boldsymbol{\xi}}\,\boldsymbol{\mathcal{F}}_i)
\,dX
= \\
&\int_{0}^{L}
\delta\boldsymbol{\zeta}^{\mathsf{T}}
(\bar{\boldsymbol{\mathcal{F}}}_a
+
\bar{\boldsymbol{\mathcal{F}}}_e)
\,dX .
\end{aligned}
\label{eq:weak_form}
\end{equation}
}

\noindent The weak statement in {\small \eqref{eq:weak_form}} is expressed in generalized coordinates using {\small $\delta\boldsymbol{\zeta}=\mathbf{J}\delta\mathbf{q}$} and {\small $\dot{\boldsymbol{\eta}}=\mathbf{J}\ddot{\mathbf{q}}+\dot{\mathbf{J}}\dot{\mathbf{q}}$}. Substitution into {\small \eqref{eq:weak_form}}, together with the load model, yields the generalized equation of motion:

{\small
\begin{equation}
\begin{aligned}
&\left[\int_{0}^{L}\mathbf{J}^{\mathsf{T}}\boldsymbol{\mathcal{M}}\mathbf{J}\,dX\right]\ddot{\mathbf{q}}
+
\left[\int_{0}^{L}\!\left(
\mathbf{J}^{\mathsf{T}}\boldsymbol{\mathcal{M}}\dot{\mathbf{J}}
+
\mathbf{J}^{\mathsf{T}}\mathrm{ad}^{*}_{\mathbf{J}\dot{\mathbf{q}}}
\boldsymbol{\mathcal{M}}\mathbf{J}
\right)\!dX\right]\dot{\mathbf{q}}-\\
&\int_{0}^{L}\mathbf{J}^{\mathsf{T}}
\left(
\boldsymbol{\mathcal{F}}_i'
+
\mathrm{ad}^{*}_{\boldsymbol{\xi}}\boldsymbol{\mathcal{F}}_i
\right)dX
=
\left[\int_{0}^{L}\mathbf{J}^{\mathsf{T}}
\bar{\boldsymbol{\mathcal{F}}}_a
dX\right]+
\mathbf{J}(\bar{X})^{\mathsf{T}}\boldsymbol{\mathcal{F}}_{p}\\
&+\left[\int_{0}^{L}\mathbf{J}^{\mathsf{T}}\boldsymbol{\mathcal{M}}
\mathrm{Ad}^{-1}_{\mathbf{g}}\,dX\right]
\mathrm{Ad}^{-1}_{\mathbf{g}_r}\boldsymbol{\mathcal{G}} .
\end{aligned}
\label{eq:generalized_dynamics}
\end{equation}
}

\noindent Grouping the integral terms in {\small (\ref{eq:generalized_dynamics})} gives the conventional form
{\small
\begin{equation}
\mathbf{M}\ddot{\mathbf{q}}
+\mathbf{C}\dot{\mathbf{q}}
+\mathbf{K}\mathbf{q}
+\mathbf{D}\dot{\mathbf{q}}
=\boldsymbol{\tau}
+\mathbf{F}_{\mathrm{ext}}
+\mathbf{F}_{g},
\label{eq:compact_dynamics}
\end{equation}
}

\noindent where {\small $\mathbf{M},\mathbf{C},\mathbf{K},\mathbf{D}\in\mathbb{R}^{n\times n}$} are the generalized inertia, Coriolis, stiffness, and damping matrices, respectively, and {\small $\boldsymbol{\tau},\mathbf{F}_{\mathrm{ext}},\mathbf{F}_{g}\in\mathbb{R}^{n}$} denote the generalized actuation input, external force, and gravity force, respectively. Further details on the derivation and construction of these generalized dynamic terms can be found in~\cite{renda2020geometric,hussain2021compliant,danesh2026real}.

\vspace{-1 em}

\subsection{Closed-Chain Constrained Dynamics}
The supportive architecture introduces loop-closure constraints because the coupled CRs are mechanically connected through the interaction structure. To derive the motion equations of this closed-chain system, the Lie-group formulation for CRs in \cite{armanini2021discrete} is adopted. The loop closure conditions are imposed at the velocity level in Pfaffian form as linear restrictions on the generalized velocity vector. Accordingly, the closed-chain kinematics satisfy {\small $\mathbf{A}(\mathbf{q})\,\dot{\mathbf{q}}=\mathbf{0},$} where {\small $\mathbf{A}(\mathbf{q})\in\mathbb{R}^{n_c\times n}$} is the constraint Jacobian and {\small $n_c$} is the number of constraints. The matrix {\small $\mathbf{A}$} is formed by enforcing kinematic compatibility at each closed-loop joint between frames attached to bodies {\small $A$} and {\small $B$}. For the {\small $i$}-th joint, the body Jacobians {\small $\mathbf{J}_{A_i}(\mathbf{q}),\mathbf{J}_{B_i}(\mathbf{q})\in\mathbb{R}^{6\times n}$} map the generalized velocity {\small $\dot{\mathbf{q}}$} to the corresponding body twists, expressed in the same joint coordinate frame. The relative twist across the joint is therefore
{\small
$\boldsymbol{\eta}_{i}
=
\big(\mathbf{J}_{A_i}(\mathbf{q})-\mathbf{J}_{B_i}(\mathbf{q})\big)\dot{\mathbf{q}}$}.

The joint constraint suppresses relative twist components that are incompatible with the joint type. These restrictions are represented by {\small$\mathbf{B}_{p,i}\in\mathbb{R}^{6\times n_{ci}}$}, whose columns span the joint constraint wrench subspace. By dual orthogonality, the admissible relative motion must be orthogonal to the constraint wrenches, leading to the Pfaffian constraint
{\small
$
\mathbf{B}_{p,i}^{\mathsf{T}}\boldsymbol{\eta}_{i}=0 $}. Replacing the relative twist with its Jacobian expression and assembling the Pfaffian constraints for all constrained joints yields the global constraint Jacobian
{\small
$
\mathbf{A}(\mathbf{q})
=
\begin{bmatrix}
\mathbf{B}_{p,i}^{\mathsf{T}}
\big(\mathbf{J}_{A_i}(\mathbf{q})-\mathbf{J}_{B_i}(\mathbf{q})\big)
\end{bmatrix}_{i=1}^{n_c}
\in \mathbb{R}^{n_c \times n}.$
}

The internal reaction wrenches required to maintain the closed-chain constraints are represented by the Lagrange multiplier vector {\small $\boldsymbol{\lambda}\in\mathbb{R}^{n_c}$}. The generalized dynamics are consequently augmented by the constraint force term as
{\small
\begin{equation}
\begin{aligned}
\mathbf{M}\ddot{\mathbf{q}}
+
\mathbf{C}\dot{\mathbf{q}}
+
\mathbf{K}\mathbf{q}
+
\mathbf{D}\dot{\mathbf{q}}
=
\boldsymbol{\tau}
+
\mathbf{F_{ext}}
+
\mathbf{F_{g}}
+
\mathbf{A}^{\mathsf{T}}\boldsymbol{\lambda}.
\label{eq:compact_dynamics_constrain}
\end{aligned}
\end{equation}
}

The coupled relation in {\small \eqref{eq:compact_dynamics_constrain}} can be solved explicitly for the Lagrange multipliers {\small $\boldsymbol{\lambda}$}, which can then be substituted into the dynamic equation to obtain a representation involving only the generalized coordinates. Here, an orthogonal projector onto the null space of the constraint Jacobian {\small $\mathbf{A}(\mathbf{q})$} is used to formulate the constrained dynamics, as stated below.

\begin{lemma}{\normalfont\cite{aghili2011projection}}
\label{lem:P_orth}

Let {\small $\mathbf{A}(\mathbf{q})\in\mathbb{R}^{n_c\times n}$} be the constraint Jacobian, and assume that it has full row rank. Define
{\small
\begin{align}
\mathbf{P}(\mathbf{q})
=
\mathbf{I}
-
\mathbf{A}^{+}(\mathbf{q})\,\mathbf{A}(\mathbf{q}),
\label{eq:P_def}
\end{align}
}

\noindent where {\small $\mathbf{A}^{+}$} denotes the Moore–Penrose inverse. Then {\small $\mathbf{P}$} is an orthogonal projector with the properties
{\small
\begin{align}
\mathbf{P}^{\mathsf{T}}=\mathbf{P},\qquad
\mathbf{P}^2=\mathbf{P},
\end{align}
}
and it annihilates the constrained directions {\small $ \mathbf{A}\mathbf{P}=
\mathbf{P}\mathbf{A}^{\mathsf{T}}=\mathbf{0}.
$
}
\end{lemma}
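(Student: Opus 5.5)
The plan is to use the explicit formula for the Moore--Penrose inverse under the full-row-rank hypothesis, and then verify each claimed identity by direct algebra.

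Since $\mathbf{A}$ has full row rank, $\mathbf{A}\mathbf{A}^{\mathsf{T}}\in\mathbb{R}^{n_c\times n_c}$ is invertible. Hence $\mathbf{A}^{+}=\mathbf{A}^{\mathsf{T}}(\mathbf{A}\mathbf{A}^{\mathsf{T}})^{-1}$. This is checked directly against the four Penrose conditions; in particular it gives $\mathbf{A}\mathbf{A}^{+}=\mathbf{I}_{n_c}$.

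Write $\mathbf{Q}=\mathbf{A}^{+}\mathbf{A}=\mathbf{A}^{\mathsf{T}}(\mathbf{A}\mathbf{A}^{\mathsf{T}})^{-1}\mathbf{A}$. The properties of $\mathbf{P}=\mathbf{I}-\mathbf{Q}$ then follow in turn.

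\textbf{Symmetry.} $\mathbf{Q}$ is symmetric, because $(\mathbf{A}\mathbf{A}^{\mathsf{T}})^{-1}$ is symmetric. Therefore $\mathbf{P}^{\mathsf{T}}=\mathbf{P}$.

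\textbf{Idempotence.} Using $\mathbf{A}\mathbf{A}^{+}=\mathbf{I}$,
\begin{equation*}
\mathbf{Q}^2=\mathbf{A}^{+}(\mathbf{A}\mathbf{A}^{+})\mathbf{A}=\mathbf{Q},
\end{equation*}
so
\begin{equation*}
\mathbf{P}^2=\mathbf{I}-2\mathbf{Q}+\mathbf{Q}^2=\mathbf{P}.
\end{equation*}
A symmetric idempotent matrix is an orthogonal projector. Its range is $\ker\mathbf{A}$, since $\mathbf{Q}$ projects onto $\mathrm{range}(\mathbf{A}^{\mathsf{T}})=(\ker\mathbf{A})^{\perp}$.

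\textbf{Annihilation.} First,
\begin{equation*}
\mathbf{A}\mathbf{P}=\mathbf{A}-(\mathbf{A}\mathbf{A}^{+})\mathbf{A}=\mathbf{A}-\mathbf{A}=\mathbf{0}.
\end{equation*}
Transposing and using the symmetry of $\mathbf{P}$ gives $\mathbf{P}\mathbf{A}^{\mathsf{T}}=(\mathbf{A}\mathbf{P})^{\mathsf{T}}=\mathbf{0}$.

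There is no real obstacle in this argument. The only point needing care is the closed form of $\mathbf{A}^{+}$, which relies on the full-row-rank assumption. Alternatively, one can avoid the closed form altogether. The Penrose identities $\mathbf{A}\mathbf{A}^{+}\mathbf{A}=\mathbf{A}$ and $(\mathbf{A}^{+}\mathbf{A})^{\mathsf{T}}=\mathbf{A}^{+}\mathbf{A}$ hold for any matrix, so the lemma's conclusions in fact hold even without the rank assumption.
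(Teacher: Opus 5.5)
Your proof is correct. With full row rank, $\mathbf{A}^{+}=\mathbf{A}^{\mathsf{T}}(\mathbf{A}\mathbf{A}^{\mathsf{T}})^{-1}$ gives $\mathbf{A}\mathbf{A}^{+}=\mathbf{I}$, and symmetry, idempotence, $\mathbf{A}\mathbf{P}=\mathbf{0}$ and (by transposition) $\mathbf{P}\mathbf{A}^{\mathsf{T}}=\mathbf{0}$ follow as you show. Your remark that the Penrose identities make the rank hypothesis unnecessary for these particular conclusions is also right. The paper gives no proof of its own and simply defers to the cited Aghili reference, so your direct verification is the standard argument; in the paper, full row rank matters for other things, such as a unique $\boldsymbol{\lambda}$ and smooth dependence of $\mathbf{P}(\mathbf{q})$ on $\mathbf{q}$, not for this lemma.
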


Premultiplication of the constrained dynamics {\small\eqref{eq:compact_dynamics_constrain}} by {\small $\mathbf{P}$} removes the constraint force contribution because {\small $\mathbf{P}\mathbf{A}^{\mathsf{T}}\boldsymbol{\lambda}=\mathbf{0}$}. The resulting projected dynamics are
{\small
\begin{align}
\mathbf{P}
\Big(
\mathbf{M}\ddot{\mathbf{q}}
+
\mathbf{C}\dot{\mathbf{q}}
+
\mathbf{K}\mathbf{q}
+
\mathbf{D}\dot{\mathbf{q}}
\Big)
=
\mathbf{P}
\Big(
\boldsymbol{\tau}
+
\mathbf{F_{ext}}
+
\mathbf{F}_{g}
\Big),
\label{eq:projected_dynamics}
\end{align}
}

\noindent which governs the system motion entirely within the tangent space of the constraint manifold.

\section{Constraint Consistent Control Design}

This section develops a constraint consistent control strategy for regulating
the Cartesian motion of the SCR while preserving the kinematic constraints of
the coupled system.

Let $\mathbf{x},\mathbf{x}_{d}\in\mathbb{R}^{3}$ denote the current and desired
Cartesian positions of the operative arm tip, respectively. The task-space
position and velocity errors are defined as
{\small
$
\mathbf{e}
=
\mathbf{x}_{d}-\mathbf{x},
\dot{\mathbf{e}}
=
\dot{\mathbf{x}}_{d}-\dot{\mathbf{x}},
$
} where $\dot{\mathbf{x}}$ is the operative-arm tip velocity obtained from the
GVS model.

\subsection{Projected Sliding Mode Control}

Let {\small$\mathbf{J}_{p}(\mathbf{q})\in\mathbb{R}^{3\times n}$} denote the translational Jacobian of the operative arm tip, such that
{\small
\begin{equation}
\dot{\mathbf{x}}=\mathbf{J}_{p}(\mathbf{q})\dot{\mathbf{q}} .
\label{eq:tip_velocity_jacobian}
\end{equation}
}

\noindent The controller is applied in a nonsingular operating region where {\small$\mathbf{J}_{p}$} remains full row rank. 
The Cartesian sliding variable is selected as
{\small
\begin{equation}
\mathbf{s}=\dot{\mathbf{e}}+\boldsymbol{\Gamma}\mathbf{e},
\label{eq:sliding_variable}
\end{equation}
}

\noindent where {\small$\boldsymbol{\Gamma}\in\mathbb{R}^{3\times3}$} is positive definite. If {\small$\mathbf{s}\rightarrow\mathbf{0}$}, then {\small$\dot{\mathbf{e}}+\boldsymbol{\Gamma}\mathbf{e}=\mathbf{0}$}, and the task-space error converges to zero.
The projected sliding mode control force, defined as
{\small$\boldsymbol{\tau}_{\mathrm{smc}}^{\mathrm{proj}}=\mathbf{P}\boldsymbol{\tau}_{\mathrm{smc}}$}, is given by

{\small
\begin{equation}
\begin{aligned}
\boldsymbol{\tau}_{\mathrm{smc}}^{\mathrm{proj}}
=
\mathbf{P}
\bigg[
&(\mathbf{C}+\mathbf{D})\dot{\mathbf{q}}
+\mathbf{K}\mathbf{q}
-\mathbf{F}_{\mathrm{ext}}
-\mathbf{F}_{g}
-\mathbf{M}\mathbf{J}_{p}^{+}\dot{\mathbf{J}}_{p}\dot{\mathbf{q}}  \\
&+
\mathbf{M}\mathbf{J}_{p}^{+}
\Big(
\ddot{\mathbf{x}}_{d}
+\boldsymbol{\Gamma}\dot{\mathbf{e}}
+\mathbf{K}_{s}\mathbf{s}
+\mathbf{R}_{H}\tanh(\boldsymbol{\Phi}^{-1}\mathbf{s})
\Big)
\bigg],
\end{aligned}
\label{eq:projected_smc_law}
\end{equation}
}

\noindent where {\small$\mathbf{K}_{s}\succ\mathbf{0}$} is the linear feedback gain, {\small$\mathbf{R}_{H}\succ\mathbf{0}$} is a diagonal switching gain matrix, and {\small$\boldsymbol{\Phi}\succ\mathbf{0}$} is a diagonal boundary layer matrix. The term {\small$\mathbf{R}_{H}\tanh(\boldsymbol{\Phi}^{-1}\mathbf{s})$} provides a bounded nonlinear reaching action, where the hyperbolic tangent function smooths the discontinuous switching action to reduce chattering near the sliding surface \cite{mancini2020sliding}. Moreover, {\small$\dot{\mathbf{J}}_{p}$} is the time derivative of the translational Jacobian, and {\small$\mathbf{J}_{p}^{+}$} is a right inverse of {\small$\mathbf{J}_{p}$}.

The following stability analysis considers the projected sliding mode control phase, {\small$t<t_{\mathrm{stiff,on}}$}, during which the operative arm tip is regulated to the desired position and the control input is given by {\small$\boldsymbol{\tau}^{\mathrm{proj}}=\boldsymbol{\tau}_{\mathrm{smc}}^{\mathrm{proj}}$}. The analysis establishes the asymptotic convergence of the task-space position error during this regulation phase. Once the desired position is reached, at {\small$t\geq t_{\mathrm{stiff,on}}$}, the active stiffness control term is introduced to modify the apparent force--displacement response of the SCR under external loading.

\begin{assumption}
\label{ass:task_rank}
The constrained task mapping remains nonsingular in the operating region, and {\small$\mathbf{J}_{p}^{+}$} is selected such that
{\small$\mathcal{R}(\mathbf{J}_{p}^{+})\subseteq\mathcal{R}(\mathbf{P})$}.
\end{assumption}

\begin{theorem}
\label{thm:projected_smc_stability}
Consider the projected closed-chain SCR dynamics under \eqref{eq:projected_smc_law}. If {\small$\boldsymbol{\Gamma}\succ\mathbf{0}$}, {\small$\mathbf{K}_{s}\succ\mathbf{0}$}, and {\small$\mathbf{R}_{H}$} is diagonal positive definite, then {\small$\mathbf{s}\rightarrow\mathbf{0}$} and {\small$\mathbf{e}\rightarrow\mathbf{0}$} asymptotically.
\end{theorem}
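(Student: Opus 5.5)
The plan is to show that substituting \eqref{eq:projected_smc_law} into the projected dynamics \eqref{eq:projected_dynamics} yields a closed-loop equation for the sliding variable of the form
\[
\dot{\mathbf{s}} = -\mathbf{K}_{s}\mathbf{s} - \mathbf{R}_{H}\tanh(\boldsymbol{\Phi}^{-1}\mathbf{s}).
\]
From this, a quadratic Lyapunov function in $\mathbf{s}$ gives $\mathbf{s}\to\mathbf{0}$. The error dynamics $\dot{\mathbf{e}}+\boldsymbol{\Gamma}\mathbf{e}=\mathbf{s}$, viewed as a stable linear filter driven by a vanishing input, then give $\mathbf{e}\to\mathbf{0}$.

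\textbf{Step 1: closed-loop acceleration.} Write $\boldsymbol{\nu}=\ddot{\mathbf{x}}_{d}+\boldsymbol{\Gamma}\dot{\mathbf{e}}+\mathbf{K}_{s}\mathbf{s}+\mathbf{R}_{H}\tanh(\boldsymbol{\Phi}^{-1}\mathbf{s})$. With $\boldsymbol{\tau}=\boldsymbol{\tau}_{\mathrm{smc}}^{\mathrm{proj}}$ and $\mathbf{P}^2=\mathbf{P}$ (Lemma~\ref{lem:P_orth}), the gravity, external, stiffness, Coriolis and damping terms cancel in \eqref{eq:projected_dynamics}. What remains is
\[
\mathbf{P}\mathbf{M}\ddot{\mathbf{q}}=\mathbf{P}\mathbf{M}\mathbf{J}_{p}^{+}\bigl(\boldsymbol{\nu}-\dot{\mathbf{J}}_{p}\dot{\mathbf{q}}\bigr).
\]
Along the constraint manifold, differentiating $\mathbf{A}\dot{\mathbf{q}}=\mathbf{0}$ shows that $\ddot{\mathbf{q}}$ differs from an element of $\mathcal{R}(\mathbf{P})$ only by the velocity-dependent term $-\mathbf{A}^{+}\dot{\mathbf{A}}\dot{\mathbf{q}}$. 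Assumption~\ref{ass:task_rank} ensures that $\mathbf{J}_{p}^{+}(\boldsymbol{\nu}-\dot{\mathbf{J}}_{p}\dot{\mathbf{q}})$ lies in $\mathcal{R}(\mathbf{P})$.

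I would then use the fact that $\mathbf{P}\mathbf{M}$ restricted to $\mathcal{R}(\mathbf{P})$ is injective, because $\mathbf{M}\succ\mathbf{0}$ gives $\mathbf{v}^{\mathsf T}\mathbf{P}\mathbf{M}\mathbf{v}=\mathbf{v}^{\mathsf T}\mathbf{M}\mathbf{v}>0$ for nonzero $\mathbf{v}\in\mathcal{R}(\mathbf{P})$. This should give $\ddot{\mathbf{q}}=\mathbf{J}_{p}^{+}(\boldsymbol{\nu}-\dot{\mathbf{J}}_{p}\dot{\mathbf{q}})$, with the constraint-drift term absorbed by the nonsingularity part of Assumption~\ref{ass:task_rank}, i.e.\ treated as consistent with the constrained task mapping.

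\textbf{Step 2: sliding dynamics.} From \eqref{eq:tip_velocity_jacobian}, $\ddot{\mathbf{x}}=\mathbf{J}_{p}\ddot{\mathbf{q}}+\dot{\mathbf{J}}_{p}\dot{\mathbf{q}}$. Since $\mathbf{J}_{p}\mathbf{J}_{p}^{+}=\mathbf{I}$, this gives $\ddot{\mathbf{x}}=\boldsymbol{\nu}$. Then $\dot{\mathbf{s}}=\ddot{\mathbf{x}}_{d}-\ddot{\mathbf{x}}+\boldsymbol{\Gamma}\dot{\mathbf{e}}$, and the first three terms of $\boldsymbol{\nu}$ cancel against this, leaving the closed-loop equation stated at the start.

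\textbf{Step 3: Lyapunov argument.} Take $V=\tfrac12\mathbf{s}^{\mathsf T}\mathbf{s}$. Its derivative is
\[
\dot V=-\mathbf{s}^{\mathsf T}\mathbf{K}_{s}\mathbf{s}-\sum_j r_j s_j\tanh(s_j/\phi_j)\le-\lambda_{\min}(\mathbf{K}_{s})\|\mathbf{s}\|^2,
\]
since $\mathbf{R}_{H}$ and $\boldsymbol{\Phi}$ are diagonal positive and $s\tanh(s/\phi)\ge0$. Hence $\mathbf{s}\to\mathbf{0}$ exponentially. Writing $\dot{\mathbf{e}}=-\boldsymbol{\Gamma}\mathbf{e}+\mathbf{s}$ with $\boldsymbol{\Gamma}\succ\mathbf{0}$, the system is exponentially stable and input-to-state stable. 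Concretely, $W=\tfrac12\mathbf{e}^{\mathsf T}\mathbf{e}$ satisfies $\dot W\le-\lambda_{\min}(\boldsymbol{\Gamma})\|\mathbf{e}\|^2+\|\mathbf{e}\|\|\mathbf{s}\|$, so $\mathbf{e}\to\mathbf{0}$ asymptotically.

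\textbf{Main obstacle.} The delicate step is Step~1: recovering $\ddot{\mathbf{q}}$ from the projected equation, which only determines $\mathbf{P}\mathbf{M}\ddot{\mathbf{q}}$. Projection by $\mathbf{P}$ and multiplication by $\mathbf{M}$ do not commute, and $\ddot{\mathbf{q}}$ has a constraint-induced component. I would first try the injectivity argument above, splitting $\ddot{\mathbf{q}}=\mathbf{P}\ddot{\mathbf{q}}+(\mathbf{I}-\mathbf{P})\ddot{\mathbf{q}}$ and using Assumption~\ref{ass:task_rank}. If the $(\mathbf{I}-\mathbf{P})$ part cannot be eliminated, I would interpret the nonsingularity assumption as requiring $\mathbf{J}_{p}(\mathbf{I}-\mathbf{P})\ddot{\mathbf{q}}=\mathbf{0}$ in the operating region. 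Alternatively, I would treat that term as a bounded matched perturbation dominated by $\mathbf{R}_{H}$; in that case one only gets $\mathbf{s}$ ultimately bounded within the boundary layer unless the perturbation vanishes.
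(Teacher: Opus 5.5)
Your overall route is the paper's. You substitute the law, strip the factor $\mathbf{P}\mathbf{M}$ using $\boldsymbol{\beta}^{\mathsf T}\mathbf{P}\mathbf{M}\boldsymbol{\beta}=\boldsymbol{\beta}^{\mathsf T}\mathbf{M}\boldsymbol{\beta}>0$ for $\boldsymbol{\beta}\in\mathcal{R}(\mathbf{P})$ (Assumption~\ref{ass:task_rank}), and then use $V=\tfrac12\mathbf{s}^{\mathsf T}\mathbf{s}$ and the stable filter $\dot{\mathbf{e}}=-\boldsymbol{\Gamma}\mathbf{e}+\mathbf{s}$. Your Steps 2--3 are fine, and your treatment of $\mathbf{e}$ is more explicit than the paper's. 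The paper applies injectivity to $\mathbf{P}\mathbf{M}\mathbf{J}_p^{+}$ on $\mathbb{R}^3$, whereas you apply it to $\mathbf{P}\mathbf{M}$ on $\mathcal{R}(\mathbf{P})$.

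Step~1, however, has a genuine gap, exactly where you suspected. Set $\mathbf{w}=\mathbf{J}_p^{+}(\boldsymbol{\nu}-\dot{\mathbf{J}}_p\dot{\mathbf{q}})\in\mathcal{R}(\mathbf{P})$. Since $(\mathbf{I}-\mathbf{P})\ddot{\mathbf{q}}=-\mathbf{A}^{+}\dot{\mathbf{A}}\dot{\mathbf{q}}$, the difference $\ddot{\mathbf{q}}-\mathbf{w}$ is not in $\mathcal{R}(\mathbf{P})$, so injectivity on $\mathcal{R}(\mathbf{P})$ cannot be invoked. What the projected equation together with $\mathbf{A}\ddot{\mathbf{q}}=-\dot{\mathbf{A}}\dot{\mathbf{q}}$ actually gives is $\ddot{\mathbf{q}}-\mathbf{w}=\boldsymbol{\delta}\in\ker(\mathbf{P}\mathbf{M})=\mathbf{M}^{-1}\mathcal{R}(\mathbf{A}^{\mathsf T})$. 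Explicitly, $\boldsymbol{\delta}=-\mathbf{M}^{-1}\mathbf{A}^{\mathsf T}(\mathbf{A}\mathbf{M}^{-1}\mathbf{A}^{\mathsf T})^{-1}\dot{\mathbf{A}}\dot{\mathbf{q}}$, and hence
\[
\dot{\mathbf{s}}=-\mathbf{K}_{s}\mathbf{s}-\mathbf{R}_{H}\tanh(\boldsymbol{\Phi}^{-1}\mathbf{s})-\mathbf{J}_{p}\boldsymbol{\delta}.
\]
None of your three ways out closes this:
\begin{itemize}
\item ``Absorbing'' $\mathbf{J}_p\boldsymbol{\delta}$ into Assumption~\ref{ass:task_rank} is not a deduction, because that assumption restricts $\mathbf{J}_p^{+}$, not $\ddot{\mathbf{q}}$.
\item The fallback $\mathbf{J}_p(\mathbf{I}-\mathbf{P})\ddot{\mathbf{q}}=\mathbf{0}$ is insufficient. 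From $\mathbf{P}\mathbf{M}\boldsymbol{\delta}=\mathbf{0}$, $\mathbf{P}\boldsymbol{\delta}$ is determined by the inertial coupling term $\mathbf{P}\mathbf{M}(\mathbf{I}-\mathbf{P})\ddot{\mathbf{q}}$ and is generally nonzero, so $\mathbf{J}_p\mathbf{P}\boldsymbol{\delta}$ survives.
\item The perturbation route gives only ultimate boundedness, as you note, which is weaker than the theorem.
\end{itemize}

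To obtain the statement as written, you need one of two additions. Either assume $\mathbf{J}_p\boldsymbol{\delta}=\mathbf{0}$ explicitly (e.g.\ $\dot{\mathbf{A}}\dot{\mathbf{q}}=\mathbf{0}$), or compensate the drift in the law. For the latter, replace $\mathbf{M}\mathbf{J}_p^{+}(\boldsymbol{\nu}-\dot{\mathbf{J}}_p\dot{\mathbf{q}})$ inside the bracket of \eqref{eq:projected_smc_law} by $\mathbf{M}\mathbf{a}$, with $\mathbf{a}=-\mathbf{A}^{+}\dot{\mathbf{A}}\dot{\mathbf{q}}+\mathbf{J}_p^{+}(\boldsymbol{\nu}-\dot{\mathbf{J}}_p\dot{\mathbf{q}}+\mathbf{J}_p\mathbf{A}^{+}\dot{\mathbf{A}}\dot{\mathbf{q}})$. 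Then $(\mathbf{I}-\mathbf{P})(\ddot{\mathbf{q}}-\mathbf{a})=\mathbf{0}$ and $\mathbf{P}\mathbf{M}(\ddot{\mathbf{q}}-\mathbf{a})=\mathbf{0}$. Your injectivity argument now applies and gives $\ddot{\mathbf{q}}=\mathbf{a}$, hence $\ddot{\mathbf{x}}=\boldsymbol{\nu}$ exactly.

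The paper's proof does not resolve this point either. Its passage from \eqref{eq:proof_after_substitution} to \eqref{eq:proof_projected_s_dynamics} tacitly requires $\mathbf{P}\mathbf{M}(\mathbf{I}-\mathbf{J}_p^{+}\mathbf{J}_p)\ddot{\mathbf{q}}=\mathbf{0}$. In closed loop this quantity equals $-\mathbf{P}\mathbf{M}\mathbf{J}_p^{+}\mathbf{J}_p\boldsymbol{\delta}$, so it vanishes iff $\mathbf{J}_p\boldsymbol{\delta}=\mathbf{0}$. You have located the weak step correctly, but your proposal, like the paper's argument, does not establish the exact sliding dynamics without one of these additions.
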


\begin{proof}
Substitution of \eqref{eq:projected_smc_law} into \eqref{eq:projected_dynamics} yields
{\small
\begin{equation}
\begin{aligned}
\mathbf{P}\mathbf{M}\ddot{\mathbf{q}}
=
\mathbf{P}\mathbf{M}\mathbf{J}_{p}^{+}
\Big(
&-\dot{\mathbf{J}}_{p}\dot{\mathbf{q}}
+\ddot{\mathbf{x}}_{d}
+\boldsymbol{\Gamma}\dot{\mathbf{e}}  \\
&+\mathbf{K}_{s}\mathbf{s}
+\mathbf{R}_{H}\tanh(\boldsymbol{\Phi}^{-1}\mathbf{s})
\Big).
\end{aligned}
\label{eq:proof_after_substitution}
\end{equation}
}
Since {\small$\ddot{\mathbf{x}}=\mathbf{J}_{p}\ddot{\mathbf{q}}+\dot{\mathbf{J}}_{p}\dot{\mathbf{q}}$} and
{\small$\dot{\mathbf{s}}=\ddot{\mathbf{x}}_{d}-\ddot{\mathbf{x}}+\boldsymbol{\Gamma}\dot{\mathbf{e}}$}, \eqref{eq:proof_after_substitution} gives
{\small
\begin{equation}
\mathbf{P}\mathbf{M}\mathbf{J}_{p}^{+}
\left(
\dot{\mathbf{s}}
+\mathbf{K}_{s}\mathbf{s}
+\mathbf{R}_{H}\tanh(\boldsymbol{\Phi}^{-1}\mathbf{s})
\right)
=
\mathbf{0}.
\label{eq:proof_projected_s_dynamics}
\end{equation}
}

The implication from \eqref{eq:proof_projected_s_dynamics} to the
sliding dynamics requires
{\small$\mathbf{P}\mathbf{M}\mathbf{J}_{p}^{+}$} to have full column rank. To
verify this property, assume by contradiction that
{\small$\mathbf{P}\mathbf{M}\mathbf{J}_{p}^{+}$} is rank deficient. Then, there
exists a nonzero vector $\boldsymbol{\alpha}\in\mathbb{R}^{3}$ such that
{\small
$
\mathbf{P}\mathbf{M}\mathbf{J}_{p}^{+}\boldsymbol{\alpha}
=\mathbf{0}.
$
}

\noindent Let {\small$\boldsymbol{\beta}=\mathbf{J}_{p}^{+}\boldsymbol{\alpha}$}; since {\small$\mathbf{J}_{p}^{+}$} is full column rank, {\small$\boldsymbol{\beta}\neq\mathbf{0}$}. Moreover, by Assumption~\ref{ass:task_rank}, {\small$\mathbf{P}\boldsymbol{\beta}=\boldsymbol{\beta}$}. Therefore,
{\small
$
0
=
\boldsymbol{\beta}^{\mathsf{T}}\mathbf{P}\mathbf{M}\boldsymbol{\beta}
=
\boldsymbol{\beta}^{\mathsf{T}}\mathbf{M}\boldsymbol{\beta},
$
} which contradicts $\mathbf{M}\succ\mathbf{0}$ and $\boldsymbol{\beta}\neq\mathbf{0}$. Hence, $\mathbf{P}\mathbf{M}\mathbf{J}_{p}^{+}$ is full column rank, and \eqref{eq:proof_projected_s_dynamics} implies
{\small
\begin{equation}
\dot{\mathbf{s}}
+\mathbf{K}_{s}\mathbf{s}
+\mathbf{R}_{H}\tanh(\boldsymbol{\Phi}^{-1}\mathbf{s})
=
\mathbf{0}.
\label{eq:proof_s_dynamics}
\end{equation}
}

\noindent Consider $V=\frac{1}{2}\mathbf{s}^{\mathsf{T}}\mathbf{s}$. Along \eqref{eq:proof_s_dynamics},
{\small
\begin{equation}
\dot{V}
=
-\mathbf{s}^{\mathsf{T}}\mathbf{K}_{s}\mathbf{s}
-\mathbf{s}^{\mathsf{T}}\mathbf{R}_{H}\tanh(\boldsymbol{\Phi}^{-1}\mathbf{s})
\leq
-\lambda_{\min}(\mathbf{K}_{s})\|\mathbf{s}\|^{2}.
\label{eq:proof_Vdot}
\end{equation}
}

\noindent The second term is nonnegative because $\mathbf{R}_{H}$ and
$\boldsymbol{\Phi}$ are positive diagonal matrices and
$s_i\tanh(s_i/\phi_i)\geq0$. Thus, $\dot{V}<0$ for
$\mathbf{s}\neq\mathbf{0}$, which implies that $\mathbf{s}$ converges
asymptotically to zero. From \eqref{eq:sliding_variable},
$\dot{\mathbf{e}}+\boldsymbol{\Gamma}\mathbf{e}=\mathbf{s}$; since
$\boldsymbol{\Gamma}\succ\mathbf{0}$ and
$\mathbf{s}\rightarrow\mathbf{0}$ asymptotically, the tracking error
$\mathbf{e}$ also converges asymptotically to zero.
\end{proof}

\subsection{Projected Active Apparent Stiffness Control}

After the operative arm tip reaches the desired operating configuration, an active Cartesian force is introduced to modify the apparent stiffness of the system. This force is generated from tip-position feedback and does not require direct force measurements. The external loads considered in this study are known in advance. The force is computed from the position error as

{\small
\begin{equation}
\mathbf{F}_{\mathrm{app}}=
\mathbf{K}_{\mathrm{app}}\mathbf{e}
,
\qquad
t\geq t_{\mathrm{stiff,on}},
\label{eq:apparent_stiffness_force}
\end{equation}
}

\noindent where {\small$\mathbf{K}_{\mathrm{app}}\in\mathbb{R}^{3\times3}$} is the apparent stiffness gain matrix. Prior to activation of the stiffness control term,

{\small
\begin{equation}
\mathbf{F}_{\mathrm{app}}=
\mathbf{0},
\qquad
t<t_{\mathrm{stiff,on}}.
\label{eq:inactive_apparent_stiffness}
\end{equation}
}

The Cartesian apparent stiffness force is mapped to the generalized coordinate space and projected to preserve constraint consistency:

{\small
\begin{equation}
\boldsymbol{\tau}_{\mathrm{app}}^{\mathrm{proj}}=
\mathbf{P}
\mathbf{J}_{p}^{\mathsf{T}}
\mathbf{F}_{\mathrm{app}},
\qquad
t\geq t_{\mathrm{stiff,on}}.
\label{eq:apparent_stiffness_generalized_force}
\end{equation}
}

\noindent Therefore, the total generalized control input is given by
{\small
$
\boldsymbol{\tau}^{\mathrm{proj}}=
\boldsymbol{\tau}_{\mathrm{smc}}^{\mathrm{proj}}
+
\boldsymbol{\tau}_{\mathrm{app}}^{\mathrm{proj}}.
\label{eq:total_control_force}
$
}

Assuming that the generalized control input is realizable by the tendon actuation system, i.e.,
{\small$\boldsymbol{\tau}^{\mathrm{proj}}\in\mathcal{R}(\mathbf{B}_{q})$}, it is converted into tendon tensions through the generalized actuation matrix $\mathbf{B}_{q}$ as
{\small
$
\mathbf{u}
=
\mathbf{B}_{q}^{+}
\boldsymbol{\tau}^{\mathrm{proj}},
$
} where {\small$\mathbf{u}\in\mathbb{R}^{n_{\mathrm{act}}}$} is the tendon tension vector, {\small$n_{\mathrm{act}}$} is the number of actuators, and {\small$\mathbf{B}_{q}^{+}$} denotes the Moore--Penrose pseudoinverse of the generalized actuation matrix.

It should be noted that {\small$\mathbf{K}_{\mathrm{app}}$} represents the commanded stiffness gain rather than the achieved stiffness of the SCR. For a fixed operating configuration, the achieved directional stiffness increases approximately linearly with this gain. Therefore, a desired stiffness can be obtained by identifying the slope and intercept of this relationship through calibration and selecting the corresponding gain.

\begin{remark}
\label{rem:bounded_stiffness_input}
The boundedness of the apparent stiffness input is considered within the
prescribed operating region. Assume that the SCR remains in a compact
operating set in which positive constants {\small$\bar{e}$} and {\small$\bar{J}_{p}$}
exist such that
{\small
$
\|\mathbf{e}(t)\|_{2}\leq\bar{e},
\|\mathbf{J}_{p}(\mathbf{q}(t))\|_{2}\leq\bar{J}_{p}$}, for
{\small
$t\geq t_{\mathrm{stiff,on}}.$
}
Since {\small$\mathbf{P}$} is an orthogonal projector,
{\small$\|\mathbf{P}\|_{2}\leq1$}, and the projected apparent stiffness input
satisfies
{\small
$
\left\|
\boldsymbol{\tau}_{\mathrm{app}}^{\mathrm{proj}}
\right\|_{2}
=
\left\|
\mathbf{P}\mathbf{J}_{p}^{\mathsf{T}}
\mathbf{K}_{\mathrm{app}}\mathbf{e}
\right\|_{2} 
\leq
\bar{J}_{p}
\left\|\mathbf{K}_{\mathrm{app}}\right\|_{2}
\bar{e}.
$
}
Therefore, the apparent stiffness input remains bounded and
constraint consistent within the prescribed operating region. This
conditional boundedness result does not establish asymptotic stability
of the complete closed-loop system following activation of the
apparent stiffness controller.

\end{remark}

\vspace{-1 em}
\section{Simulation Results}
\label{sec:sim_results}

The proposed control framework was initially evaluated through numerical simulations of the SCR system. The simulated platform consists of two Nitinol CRs: an operative arm and a supportive arm. The operative arm serves as the primary task arm, and its tip position is selected as the controlled Cartesian output. The supportive arm is mechanically coupled to the operative arm through a rigid closed-loop joint located \(0.42~\mathrm{m}\) from the operative arm base. The connection is arranged with a relative attachment angle of \(90^\circ\), forming a closed-chain supportive structure. The structural and material parameters used in the simulations are summarized in Table~\ref{tab:scr_structural_parameters}.

The densities reported in Table~\ref{tab:scr_structural_parameters} correspond to the effective distributed densities used in the numerical model. These values account for the combined contribution of the Nitinol backbone, tendons, disks, and additional attached components, rather than the backbone material alone. Therefore, the operative and supportive arms are assigned different effective densities according to their physical mass distributions.

\begin{table}[t]
\centering
\caption{Physical parameters of the simulated SCR system.}
\label{tab:scr_structural_parameters}
\setlength{\tabcolsep}{4pt}
\renewcommand{\arraystretch}{1.05}
\begin{tabular}{lll}
\toprule
\textbf{Component} & \textbf{Parameter} & \textbf{Value} \\
\midrule
\multirow{5}{*}{Both CRs}
& Material & Nitinol \\
& Length & $L_o=L_s=0.6~\mathrm{m}$ \\
& Young's modulus & $E_o=E_s=50~\mathrm{GPa}$ \\
& Poisson's ratio & $\nu_o=\nu_s=0.3$ \\
& Radius & $r_o=r_s=0.75565~\mathrm{mm}$ \\
\midrule
Operative CR 
& Effective density & $\rho_o=56,211~\mathrm{kg/m^3}$ \\
Supportive CR 
& Effective density & $\rho_s=50,636~\mathrm{kg/m^3}$ \\
\bottomrule
\end{tabular}
\end{table}

For the numerical implementation, each CR is modeled as a single soft link with one division. The strain field of each arm is approximated using Legendre polynomial basis functions. Linear bending is retained in two orthogonal bending directions for each arm, resulting in four generalized coordinates per CR. Since the SCR system consists of one operative arm and one supportive arm, the complete closed-chain model has a total of \(n=8\) generalized coordinates.

The distributed terms of the arms are evaluated using
\(11\) Gaussian quadrature points along each backbone. The closed-chain
connection is imposed through the loop-closure constraint Jacobian, and the
projection matrix ensures that the numerical motion remains compatible with
the rigid coupling between the two arms.

The task-space output is chosen as the Cartesian position of the operative arm
tip, $    \mathbf{x}
    =
    \begin{bmatrix}
    x & y & z
    \end{bmatrix}^{\mathsf{T}}
    \in \mathbb{R}^{3}.
$
For each simulation case, the SCR is first driven to the prescribed desired
position using the projected sliding mode controller. After the regulation
phase is completed, the active apparent stiffness term is activated at
\(t=t_{\mathrm{stiff,on}}\). A known external point load is then applied at the operative arm tip at
\(t=t_{\mathrm{load}}\). The load acts in the gravity direction, which
corresponds to the \(x\)-direction in the simulation coordinate frame.

The steady-state tip positions before and after load application are denoted by
{\small $\mathbf{x}_{\mathrm{unloaded}}$} and
{\small $\mathbf{x}_{\mathrm{loaded}}$}, respectively. The load-induced
displacement vector is defined as
{\small $\Delta\mathbf{x}
=\mathbf{x}_{\mathrm{loaded}}-\mathbf{x}_{\mathrm{unloaded}}$}.
Since the stiffness response is evaluated in the loading direction, the
corresponding {\small $x$}-direction deflection is obtained as
{\small $\Delta x
=\left|x_{\mathrm{loaded}}-x_{\mathrm{unloaded}}\right|$}.
The directional stiffness is then computed using
{\small $K_x=F_{\mathrm{load}}/\Delta x$}, where the applied force is given by
{\small $F_{\mathrm{load}}=mg$}. Therefore, {\small $\Delta x$} quantifies the
load-induced displacement, while {\small $K_x$} provides the corresponding
apparent directional stiffness.

For this representative case, the initial operative arm tip position is
{\small
\(\mathbf{x}_{0}=[0.57736,\;0.15336,\;0]^{\mathsf{T}}~\mathrm{m}\)
},
and Desired Point~1 is defined as
{\small
\(\mathbf{x}_{d,1}=[0.33262,\;0.20213,\;-0.01305]^{\mathsf{T}}~\mathrm{m}\)
}.
Fig.~\ref{fig:sim_time_response_d1_100g_kapp5000} shows the corresponding
simulation response under a \(100\)~g load with
\(K_{\mathrm{app},x}=5~\mathrm{N/mm}\). The total simulation duration is
\(6\)~s. The active stiffness term is turned on at \(t=3\)~s, and the external
load is applied at \(t=3.5\)~s.

As shown in Fig.~\ref{fig:sim_time_response_d1_100g_kapp5000}(a), the
Cartesian errors converge rapidly during the initial regulation phase. All error components
approach zero within a short time interval, confirming that the projected
sliding mode controller drives the operative arm tip to the desired
configuration while preserving the closed-chain constraints.

Fig.~\ref{fig:sim_time_response_d1_100g_kapp5000}(b) shows the active apparent
stiffness force. When the stiffness controller is activated at \(t=3\)~s, the
controller generates a task-space force according to the prescribed stiffness
gain. After the external load is applied at \(t=3.5\)~s, the apparent stiffness
force increases mainly in the \(x\)-direction to oppose the load-induced
displacement, while the \(y\)- and \(z\)-components remain close to zero. The
force response exhibits a short transient and then settles to a steady value.

The corresponding tendon inputs are shown in
Fig.~\ref{fig:sim_time_response_d1_100g_kapp5000}(c). The tendon inputs are constrained within the saturation bounds of
\([-50,50]\)~N. During the initial regulation stage, some inputs reach these
bounds due to the large initial position error; however, after the transient
response, the inputs decrease and remain bounded within the prescribed limits.
Only small changes occur when the stiffness term is activated, whereas the load
application produces a clear adjustment in the tendon inputs to compensate for
the external force. These results indicate that the proposed stiffness
modulation modifies the load response of the SCR without destabilizing the
closed-chain system.

The effect of the commanded apparent stiffness was then studied by varying
\(K_{\mathrm{app},x}\) under different known external loads.
Fig.~\ref{fig:sim_stiffness_load_comparison} shows the resulting
\(x\)-direction deflection and the corresponding directional stiffness for
\(20\), \(50\), and \(100\)~g loads. As \(K_{\mathrm{app},x}\) increases, the
load-induced deflection decreases for all load cases. At the same time, the
computed directional stiffness increases, indicating that the controller makes
the SCR respond more stiffly in the loading direction. This trend confirms
that the active stiffness term can tune the apparent force--displacement
behavior of the closed-chain SCR. 
\noindent It should be noted that $K_x$ represents the effective directional stiffness computed from the applied load and the corresponding displacement between the unloaded and loaded equilibrium configurations. Although $K_{\mathrm{app},x}$ is fixed, the SCR exhibits geometrically nonlinear and configuration-dependent behavior. Different loads therefore produce different equilibrium configurations and modify the local closed-chain kinematics and force--displacement response. Consequently, the achieved directional stiffness can vary with the applied load.

To examine whether the proposed stiffness control is maintained across
different robot configurations, a \(100\)~g load was applied after regulating
the SCR to three different desired positions. In all cases, the operative arm
tip started from the same initial position.
The desired positions were selected as
{\small
\(\mathbf{x}_{d,1}=[0.33262,\;0.20213,\;-0.01305]^{\mathsf{T}}~\mathrm{m}\)
},
{\small
\(\mathbf{x}_{d,2}=[0.51060,\;-0.00611,\;-0.00253]^{\mathsf{T}}~\mathrm{m}\)
},
and
{\small
\(\mathbf{x}_{d,3}=[0.45868,\;0.24256,\;-0.02470]^{\mathsf{T}}~\mathrm{m}\)
}.

Fig.~\ref{fig:stiffness_control} compares the \(x\)-direction deflection and
the corresponding directional stiffness for these three operating
configurations as \(K_{\mathrm{app},x}\) is varied. The results show that the
SCR exhibits different inherent stiffness values at different desired
positions, since the compliance of the closed-chain structure depends on its
configuration. Therefore, the load-induced deflection is not identical for the
three desired points. However, despite these configuration-dependent stiffness
differences, the same trend is observed in all cases: increasing
\(K_{\mathrm{app},x}\) reduces the load-induced displacement and increases the
measured directional stiffness. This demonstrates that the proposed active stiffness controller can modify the apparent stiffness of the SCR across different operating configurations. Moreover, the simulation results show an approximately linear relationship between the commanded gain $K_{\mathrm{app},x}$ and the achieved directional stiffness $K_x$ for each configuration. Thus, a configuration specific linear model can be fitted as
$
K_x=aK_{\mathrm{app},x}+b,
$
 where $a$ and $b$ are identified from the simulation data. The gain required to achieve a desired stiffness $K_{x,d}$ is then $K_{\mathrm{app},x}
=
\frac{K_{x,d}-b}{a}$.

\begin{figure}
    \centering
    \includegraphics[width=0.4\textwidth]{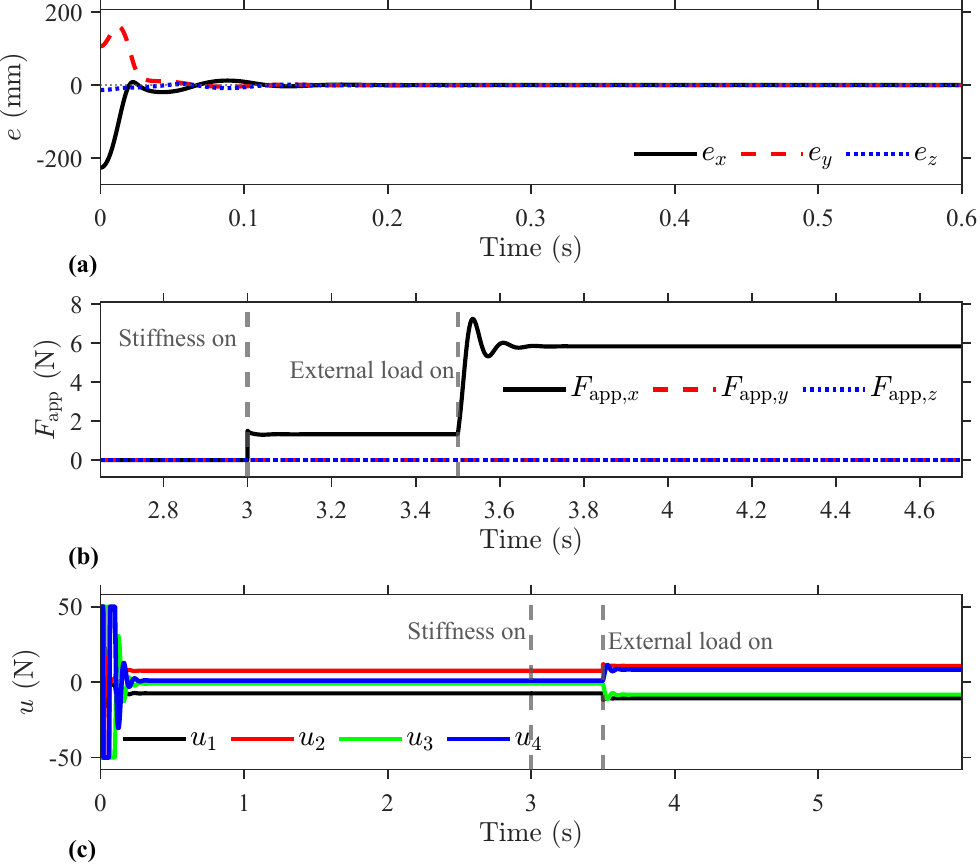}
\caption{Simulation response for Desired Point~1 under a $100$~g load and $K_{\mathrm{app},x}=5$~N/mm: (a) Cartesian errors, (b) active apparent stiffness force, and (c) tendon inputs. Gray lines indicate stiffness activation and load application.}
\label{fig:sim_time_response_d1_100g_kapp5000}
\end{figure}

\begin{figure}
    \centering
    \includegraphics[width=0.48\textwidth]{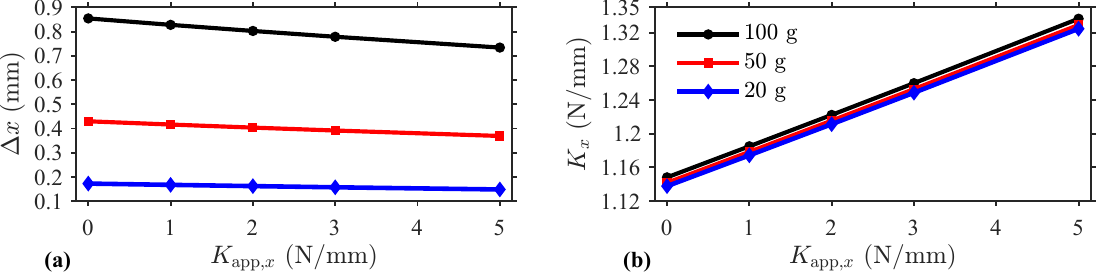}
\caption{Effect of the apparent stiffness $K_{\mathrm{app},x}$ on (a) the load-induced tip deflection in the $x$ direction and (b) the computed directional stiffness under external loads of $20$, $50$, and $100$~g.}
\label{fig:sim_stiffness_load_comparison}
\end{figure}

\begin{figure}
    \centering
    \includegraphics[width=0.43\textwidth]{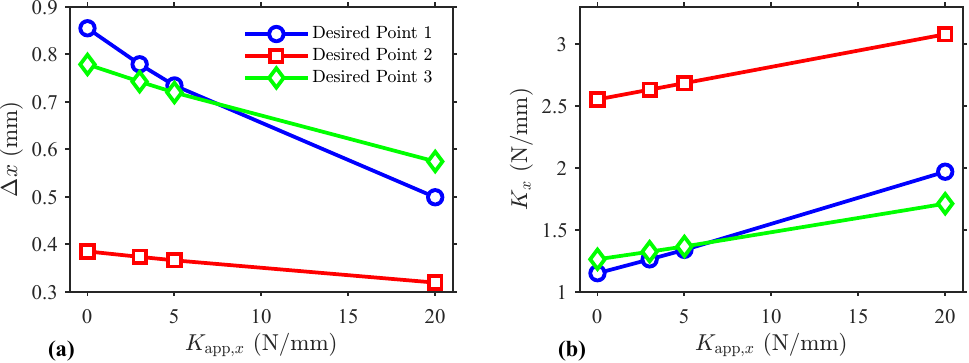}
\caption{Effect of $K_{\mathrm{app},x}$ on (a) the $x$-direction tip deflection and (b) the measured directional stiffness under a $100$~g load at three desired positions.}
    \label{fig:stiffness_control}
\end{figure}

\section{Experimental Results}
\label{sec:experimental_results}

This section presents the experimental validation of the proposed active stiffness control under known external loads and different desired configurations.

\subsection{Experimental Setup}

The proposed stiffness control framework was experimentally validated on the SCR platform shown in Fig.~\ref{fig:scr_system}(b). The platform consists of two tendon-driven Nitinol CRs with bases separated by \(0.5~\mathrm{m}\). Kevlar tendons are routed parallel to the backbone through spacer disks and are distributed at \(45^\circ\) intervals around the backbone to generate bending through motor actuation. The supportive arm is rigidly connected to the operative arm at \(0.42~\mathrm{m}\) from the operative arm base, with a \(90^\circ\) connection angle. The same geometry was used in simulation and experiment.

The operative arm tip position was measured in real time using a Tracker 3.0 Vicon motion capture system (Bilston, UK). Reflective markers were attached to the robot, and the measured marker positions were used to reconstruct the Cartesian tip position. In each experiment, the projected sliding mode controller first regulated the tip to the desired position. After regulation, the active stiffness term was activated, and a known weight was hung at the operative arm tip. The resulting steady-state displacement was then measured and used to compute the directional stiffness.

The control input was first obtained as tendon tension from the GVS dynamic model. A tendon actuation function governed by the model dynamics was then used to compute the corresponding tendon displacements \cite{danesh2026real}. These displacements were converted into motor position commands using the known motor shaft diameter and sent to the Dynamixel AX-12A servo motors (Robotis, Seoul, Korea).

\vspace{-0.7 em}
\subsection{Stiffness Control Under Different Loads}

Fig.~\ref{fig:experimental_configurations} shows the experimental configurations used for stiffness validation. The SCR is first driven from the initial configuration in Fig.~\ref{fig:experimental_configurations}(a) to the desired operating point. After the operative arm tip reaches the desired point, the active stiffness controller is activated, and an external load is applied at the tip.

The first experimental study evaluates the effect of the commanded apparent stiffness at Desired Point~A, shown in Fig.~\ref{fig:experimental_configurations}(b) and Fig.~\ref{fig:experimental_configurations}(c), under two external loads. Fig.~\ref{fig:exp_pointA_50g_100g_stiffness} reports the resulting \(x\)-direction deflection and the corresponding directional stiffness for \(50\)~g and \(100\)~g loads as \(K_{\mathrm{app},x}\) is varied.

As expected, the larger load produces a larger tip deflection. However, for both load cases, increasing \(K_{\mathrm{app},x}\) reduces the measured load-induced displacement. This confirms that the active stiffness term increases the resistance of the operative arm tip against the applied load. The corresponding directional stiffness increases with \(K_{\mathrm{app},x}\), showing that the proposed controller changes the apparent force--displacement behavior of the SCR. The error bars show the standard deviation across three repeated trials, indicating the repeatability of the experimental measurements.

These experimental results are consistent with the simulation findings, showing that increasing the active stiffness gain reduces the load-induced tip deflection and increases the apparent directional stiffness of the SCR.

\subsection{Configuration-Dependent Stiffness Modulation}

The second experimental study evaluates the stiffness response at the two configurations shown in Fig.~\ref{fig:experimental_configurations}. Desired Point~A is shown before and after loading in Fig.~\ref{fig:experimental_configurations}(b) and (c), while Desired Point~B is shown in Fig.~\ref{fig:experimental_configurations}(d) and (e). A \(100\)~g load was applied at the operative arm tip in both cases, and Fig.~\ref{fig:exp_stiffness_100g} compares the resulting \(x\)-direction deflection and directional stiffness.

The results show that the robot has different inherent stiffness at the two desired configurations. This is expected because the compliance of a closed-chain SCR depends on its configuration. Therefore, the same external load does not produce the same deflection at Desired Point~A and Desired Point~B. However, despite this configuration-dependent behavior, increasing \(K_{\mathrm{app},x}\) consistently reduces the load-induced displacement at both desired points. The measured directional stiffness also increases with the commanded apparent-stiffness gain.

These results demonstrate that the proposed controller can actively modify the apparent stiffness of the SCR from different operating configurations. In other words, even when the robot starts from configurations with different passive stiffness, the active stiffness term can increase the measured directional stiffness and reduce the load-induced tip displacement.

The experimental relationship between $K_{\mathrm{app},x}$ and the achieved directional stiffness exhibits greater variability than the nearly linear trend observed in simulation. This difference is attributed to unmodeled physical effects, including tendon friction and hysteresis, actuator resolution, measurement uncertainty, and small deviations in the applied loading direction. These factors influence the transmitted tendon forces and measured tip displacement. Nevertheless, the experimental results consistently demonstrate an increase in the achieved directional stiffness with increasing $K_{\mathrm{app},x}$.

\begin{figure}
    \centering
    \includegraphics[width=0.4\textwidth,trim=3.5cm 0.4cm 3.5cm 0.4cm, clip]{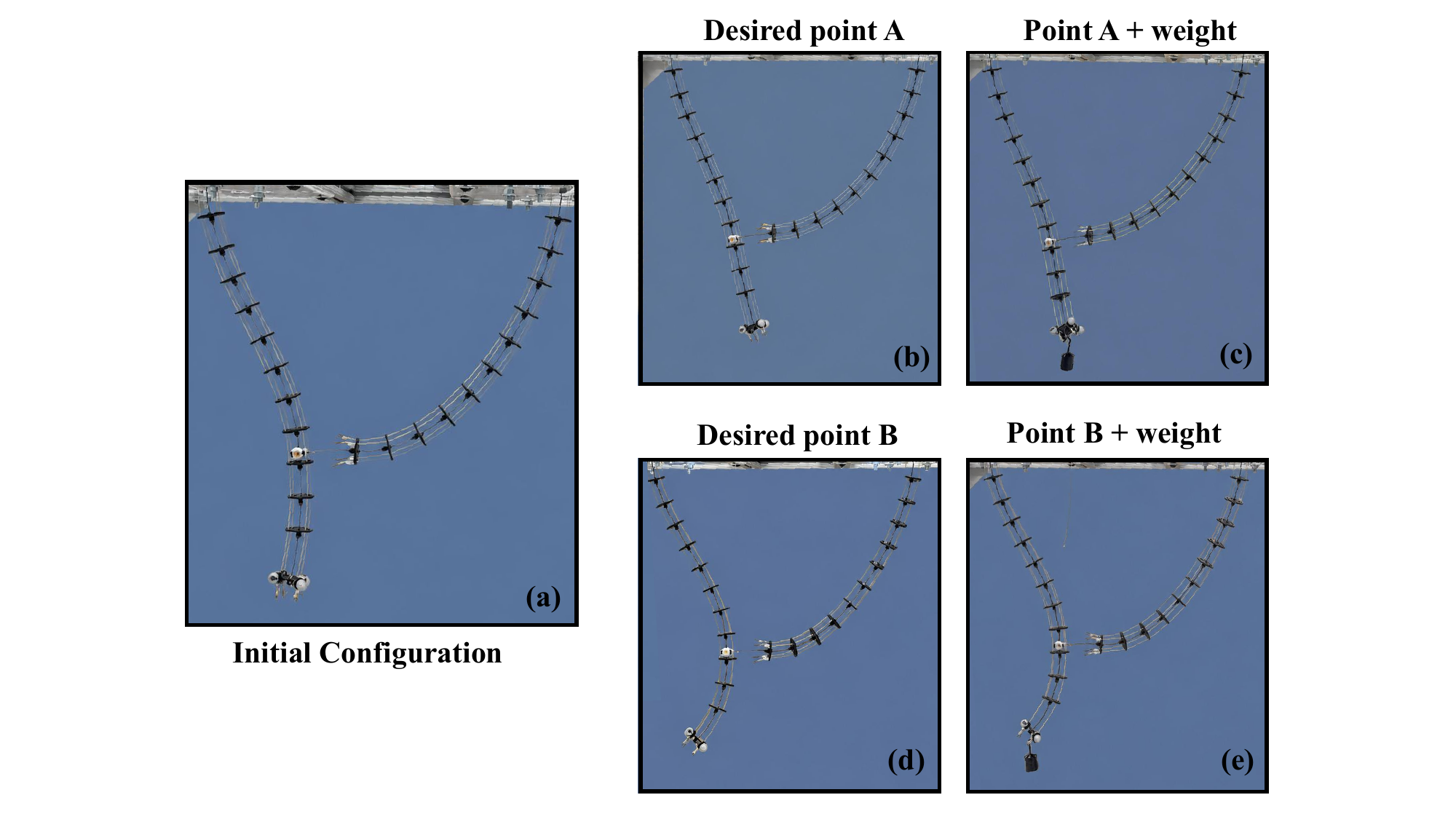}
\caption{Experimental configurations of the SCR: 
(a) initial configuration, (b) desired point A, (c) desired point A under external load, 
(d) desired point B, and (e) desired point B under external load.}
\label{fig:experimental_configurations}
    \label{fig:setup_conf}
\end{figure}

\begin{figure}
    \centering
    \includegraphics[width=0.48\textwidth]{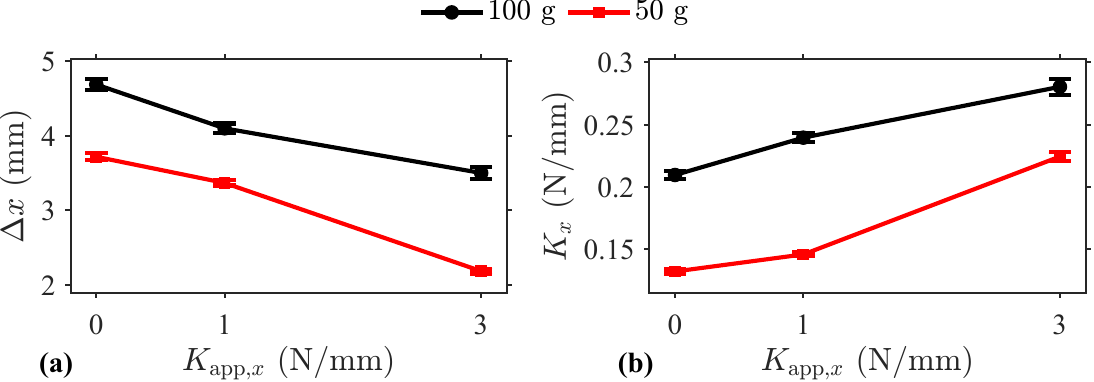}
\caption{Experimental effect of the commanded apparent stiffness $K_{\mathrm{app},x}$ on (a) the $x$-direction tip deflection and (b) the directional stiffness at Desired Point A under $50$~g and $100$~g loads. Error bars denote the standard deviation over three trials.}
\label{fig:exp_pointA_50g_100g_stiffness}
\end{figure}

\begin{figure}
    \centering
    \includegraphics[width=0.48\textwidth]{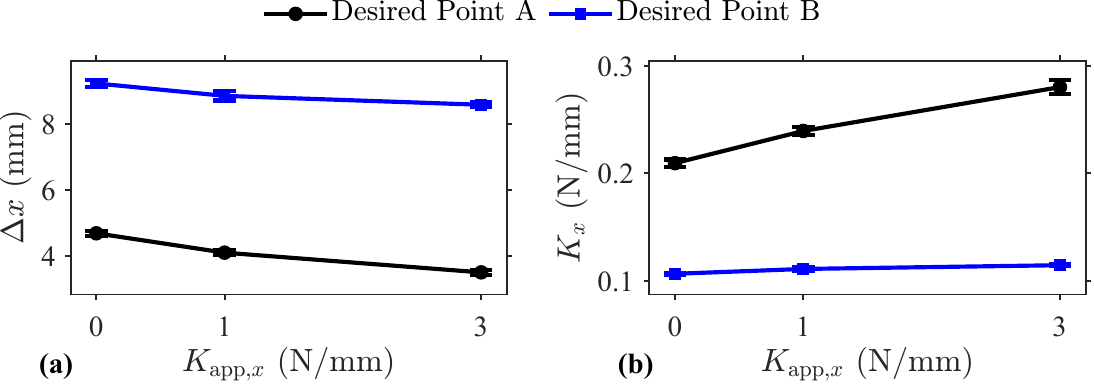}
\caption{Experimental deflection and directional stiffness versus $K_{\mathrm{app},x}$ under a $100$~g load. Error bars show standard deviation over $n=3$ trials.}
\label{fig:exp_stiffness_100g}
\end{figure}

Table~\ref{tab:stiffness_improvement} compares representative simulation and
experimental results with the corresponding SMC-only baselines, where
{\small$K_{\mathrm{app},x}=0$}. In simulation, {\small $K_{\mathrm{app},x}=20~\mathrm{N/mm}$}
reduced the deflection by {\small$41.64\%$} and increased the directional stiffness by
{\small$71.35\%$}. Experimentally, {\small$K_{\mathrm{app},x}=3~\mathrm{N/mm}$} reduced the
deflection by {\small$25.27\%\pm0.56\%$} and increased the stiffness by
{\small$33.82\%\pm0.99\%$}. Since the response is configuration dependent, other
workspace locations may exhibit different or larger improvements.

\begin{table}
\centering
\caption{Performance improvement relative to the SMC-only case.}
\label{tab:stiffness_improvement}
\scriptsize
\setlength{\tabcolsep}{12pt}
\renewcommand{\arraystretch}{0.82}
\begin{tabular}{cccc}
\toprule
\textbf{Case} &
\textbf{\shortstack{$K_{\mathrm{app},x}$\\(N/mm)}} &
\textbf{\shortstack{$\Delta x$ reduction\\(\%)}} &
\textbf{\shortstack{$K_x$ increase\\(\%)}} \\
\midrule
\multirow{3}{*}{Simulation\ Point 1}
& 3  & 8.87  & 9.74 \\
& 5  & 14.09 & 16.40 \\
& 20 & 41.64 & 71.35 \\
\midrule
\multirow{2}{*}{Experiment \ Point A}
& 1 & $12.50\pm2.81$ & $14.36\pm3.65$ \\
& 3 & $25.27\pm0.56$ & $33.82\pm0.99$ \\
\bottomrule
\end{tabular}
\end{table}

\section{Conclusion}
\label{sec:conclusion}

This paper presented an active task-space stiffness control framework for a closed-chain SCR. The SCR was modeled using the GVS formulation, and the closed-chain constraints were handled through projection. A projected sliding mode controller was used to regulate the operative arm tip to the desired position, after which an active apparent stiffness term was activated to modify the load response of the system.

Simulation and experimental results showed that increasing the stiffness control gain reduces the load-induced tip deflection and increases the apparent directional stiffness. The results also confirmed that the SCR has configuration-dependent stiffness, but the proposed controller consistently improved the stiffness response at different desired positions. These findings validate the proposed framework for active stiffness control of supportive closed-chain CRs without changing the physical structure of the robot.

\bibliographystyle{IEEEtran}
\bibliography{references}

\end{document}